\documentclass[10pt]{article} 

\usepackage[preprint]{rlj} 

\usepackage{amssymb}            
\usepackage{mathtools}          
\usepackage{mathrsfs}           
\usepackage{graphicx}           
\usepackage{subcaption}         
\usepackage[space]{grffile}     
\usepackage{url}                
\usepackage{lipsum}             

\usepackage{mathtools}
\usepackage{amsmath}
\usepackage{amssymb}
\usepackage{amsthm}
\usepackage{graphicx}
\usepackage{algorithm}
\usepackage[noend]{algpseudocode}  

\newtheorem{theorem}{Theorem} 
\newtheorem{theorem*}{Theorem}{} 
\newtheorem{lemma}{Lemma} 

\newtcolorbox{thmbox}{
    colback=black!5!white,    
    colframe=white!50!black,  
    boxrule=1pt,             
    arc=4pt,                 
    notitle,                 
    top=6pt, bottom=6pt,     
    left=4pt, right=4pt      
}
\tcbuselibrary{theorems} 

\tcbset{
    softbluebox/.style={
        colback=blue!3!white,    
        colframe=blue!20!white,  
        boxrule=0.5pt,           
        arc=3pt,                 
        top=4pt, bottom=4pt,     
        left=8pt, right=8pt      
    }
}
\newtcolorbox{algobox}[1][]{
    colback=gray!5!white,
    colbacktitle=gray!5!white, 
    colframe=gray!70!black, 
    boxrule=0pt,            
    toprule=0pt,            
    bottomrule=0pt,         
    rightrule=0pt,          
    leftrule=4pt,           
    titlerule=0pt,          
    arc=0pt,
    outer arc=0pt,
    coltitle=black,
    fonttitle=\bfseries,
    top=4pt,
    bottom=4pt,
    left=6pt,
    right=6pt,
    title={#1}
}

\usepackage{color}

\title{Maximum Entropy Exploration \\Without the Rollouts}
\setrunningtitle{Maximum Entropy Exploration Without Rollouts}

\author{Jacob Adamczyk\textsuperscript{1,2}, Adam Kamoski\textsuperscript{1}, Rahul V. Kulkarni\textsuperscript{1,2}
}

\emails{\{jacob.adamczyk001, adam.kamoski001, rahul.kulkarni\}@umb.edu}

\affiliations{
$^{1}$\textbf{Department of Physics, University of Massachusetts Boston}\\
$^{2}$\textbf{NSF Institute for Artificial Intelligence and Fundamental Interactions}

}

\newcommand{\abs}{%
Efficient exploration remains a central challenge in reinforcement learning, serving as a useful pretraining objective for data collection, particularly when an external reward function is unavailable. A principled formulation of the exploration problem is to find policies that maximize the entropy of their induced steady-state visitation distribution, thereby encouraging uniform long-run coverage of the state space. Many existing exploration approaches require estimating state visitation frequencies through repeated on-policy rollouts, which can be computationally expensive. In this work, we instead consider an intrinsic average-reward formulation in which the reward is derived from the visitation distribution itself, so that the optimal policy maximizes steady-state entropy. An entropy-regularized version of this objective admits a spectral characterization: the relevant stationary distributions can be computed from the dominant eigenvectors of a problem-dependent transition matrix. This insight leads to a novel algorithm for solving the maximum entropy exploration problem, EVE (EigenVector-based Exploration), which avoids explicit rollouts and distribution estimation, instead computing the solution through iterative updates, similar to a value-based approach. To address the original unregularized objective, we employ a posterior-policy iteration (PPI) approach, which monotonically improves the entropy and converges in value. We prove convergence of EVE under standard assumptions and demonstrate empirically that it efficiently produces policies with high steady-state entropy, achieving competitive exploration performance relative to rollout-based baselines in deterministic grid-world environments.
}

\summary{\abs
}

\begin{document}

\maketitle  

\begin{abstract}
\abs
\end{abstract}

\section{Introduction}
Efficient exploration in reinforcement learning (RL) remains a long-standing open problem. A rich subfield has emerged to tackle this problem, with new research questions fueling new ideas such as entropy- and information-seeking agents~\citep{tiomkin2017unified}, count-based systems~\citep{bellemare2016unifying,ostrovski2017count, zhou2020neural, lobel2023flipping}, curiosity-driven exploration~\citep{pathak2017curiosity} and random network distillation~\citep{burda2018exploration}. However, novel approaches and insights are still needed to address a fundamental question of exploration: \textit{How can an agent learn to obtain uniform coverage in a new environment?}

An important class of approaches for exploration in RL focuses on objectives derived from visitation frequencies. Although efficient algorithms have been formulated and developed using these objectives~\citep{hazan2019provably}, such approaches are inherently computationally demanding.
By design, the entropy objective in these approaches is 
dependent on the stationary visitation distribution induced by the policy itself. Evaluating this distribution typically requires estimating visitation frequencies through repeated rollouts. Because the objective is defined in terms of the policy-induced distribution, improving the policy requires repeatedly estimating its corresponding distribution, creating a circular dependency between policy updates and visitation estimates. In practice, this often necessitates on-policy sampling and can make the optimization problem computationally expensive.

In this work, we address the maximum entropy exploration problem from a new perspective by constructing an update equation inherently consistent with the desired visitation distribution, without requiring rollouts. We arrive at an algorithm which uses a new (reward-free) two-step temporal difference algorithm to learn the optimal policy directly.

Recent research provides a framework for addressing the challenges noted above. For systems with deterministic dynamics, Bayesian inference can be used to solve the optimal control problem in entropy-regularized RL~\citep{levine2018reinforcement}. In the average-reward setting, analytical expressions for the optimal policy and the associated state-action visitation distributions have been derived~\citep{PRR}. Our work leverages this distinctive spectral viewpoint, in which optimal policies and visitation distributions are characterized through the eigenvectors of a tilted transition operator. 

Building on the analytical results from~\cite{PRR}, we obtain expressions for key quantities in exploration, including distributions in state-action space, the intrinsic reward function, and their relationship to the environment's transition dynamics. These analytic results lead to a new update equation that can be efficiently computed, by balancing forward and backward probability flows. The resulting algorithm leads to policies that maximize the entropy of the induced state-action visitation distribution.

We consider the ``reward-free'' setting: there are no extrinsic rewards and the agent's objective is to explore the environment uniformly.
Our proposed algorithm, EVE (``EigenVector-based Exploration'') uses the dominant eigenvectors of the tilted transition matrix to determine the unique entropy-maximizing policy, solving the maximum entropy exploration problem. 
When an extrinsic reward function is provided, downstream tasks can then leverage the data collected by EVE to solve complex tasks with potentially sparse rewards. Alternatively, one might use EVE in the online setting to define the behavior policy for active exploration.
EVE shows that maximum-entropy exploration can be solved through a single fixed-point problem, allowing exploratory policies to be computed directly from the transition dynamics without estimating visitation frequencies through rollouts.

In the following section, we provide necessary background material in reinforcement learning, highlighting the entropy-regularized average-reward objective before presenting our framework for calculating optimal policies. We then describe the corresponding algorithm, its convergence properties, and provide a validating experiment. We end the paper by connecting to related work and discussing the results.

\section{Background}\label{sec:background}
Consider an environment with state space $\mathcal{S}$ and action space $\mathcal{A}$, and an agent interacting with the environment through a (generally stochastic) policy $\pi(a|s)$. The environment's transition dynamics $p(s'|s,a)$ describe the next state $s'$ based on the current state $s$ and action taken, $a$. In the following, we focus on the discrete state-action case for ease of exposition. The typical RL objective is to optimize the accumulated reward, $r(s,a)$. That is, to solve the following optimization problem:
\begin{equation}\label{eq:discounted}%
    J^*=\max_\pi \mathbb{E}_{\tau \sim{p, \pi, \mu}}\sum_{t=0}^\infty \gamma^t r(s_t,a_t),%
\end{equation}
where trajectories are sampled starting from an initial distribution $\mu$ and following dynamics $p$ and policy $\pi$ thereafter. Rewards are exponentially discounted over time with a discount factor $\gamma \in [0,1)$.
Much prior work in reinforcement learning considers the discounted objective in Equation~\eqref{eq:discounted}; however, we argue that this choice is not suitable for the exploration problem. A discount factor inherently introduces a finite timescale for the agent's decision-making capacity: $T\sim{(1-\gamma)^{-1}}$. To solve the exploration problem, an agent should visit many states, even those beyond this artificially chosen temporal horizon. To this end, we adopt an average-reward approach to the exploration problem. The average-reward RL objective has received growing interest in recent years~\citep{mahadevan1996average, zhang2021average, wan2021learning, ARTRPO, APO, hisaki2024rvi, adamczyk2025average, rojas2026differential} due to its ability to solve long-horizon, continuing tasks without a discount factor. 
In the maximum entropy exploration problem, we argue that discounting may not always be appropriate: Using a discounted objective will result in entropy maximization of the \textit{discounted} occupancy measure, whereas we may instead be interested in the undiscounted steady-state distribution, which cannot be easily computed with discounted approaches~\citep{naik2019discounted}.
In the present case, we therefore replace the discounted objective in Equation~\eqref{eq:discounted} by the \textit{average reward} objective, defined by:
\begin{equation}\label{eq:avg-rwd-objective}
    \rho^*=\max_\pi \lim_{T\to \infty} \frac{1}{T} \mathbb{E}_{\tau \sim{p, \pi, \mu}} \sum_{t=1}^{T} r(s_t,a_t) = \max_\pi\sum_{s \in \mathcal{S}} \sum_{a \in \mathcal{A}} {d_{p,\pi}}(s,a) r(s,a),
\end{equation}
where $\rho^*$ is the optimal ``reward-rate''. Assuming it exists (cf. the standard well-mixing assumptions, as in~\citep{wan2021learning}), the reward-rate can be expressed as an expectation of rewards under the optimal policy's steady-state distribution (right-hand side of Equation~\eqref{eq:avg-rwd-objective}).

In this work, we are interested in solving the maximum state-action entropy problem. In this setting, ``optimal exploration'' corresponds to finding a policy that achieves a maximally uniform distribution over state-action space. Analogous to the formulation in~\citep{hazan2019provably}, we study the following objective:
\begin{equation}
\tcbhighmath[softbluebox]{
    \max_\pi \mathbb{H}(d_{p,\pi}) = \max_\pi\left( -\sum_{s \in \mathcal{S}} \sum_{a \in \mathcal{A}} {d_{p,\pi}}(s,a) \log d_{p,\pi}(s,a)\right), \label{eq:entropy-objective}
    }
\end{equation}

where $d_{p,\pi}$ represents the stationary distribution under the transition dynamics $p$ and control policy $\pi$, obtained in the long-time limit. In other words, $d_{p,\pi}$ is the non-trivial eigenvector of the state-action transition matrix $p(s'|s,a)\pi(a'|s')$. Throughout, we will assume the transition dynamics is deterministic, aperiodic, and irreducible.

One approach to address this optimization problem is via reinforcement learning: By comparing Equation~\eqref{eq:avg-rwd-objective} to Equation~\eqref{eq:entropy-objective}, we see that one can assign the reward $r(s,a)=-\log {d_{p,\pi}}(s,a)$, to be consistent with the maximum entropy objective. 
With this identification, optimization of the average-reward objective solves this problem of maximum entropy exploration using RL.

To leverage closed-form results in the average-reward setting, we include an entropy-regularization term: $\beta^{-1}D_{\textrm{KL}}(\pi \,\|\, \pi_0)$, where $\pi_0$ is a chosen reference or prior policy (often taken to be uniform, in the case of ``MaxEnt RL''~\citep{SAC1}), and $\beta$ is a real-valued ``inverse temperature'' ($\alpha^{-1}$ in e.g.~\citep{SAC2}) responsible for regulating the importance of the Kullback-Leibler divergence, $D_{\textrm{KL}}$ relative to the accrued rewards. 
The objective for the entropy-regularized average-reward problem can be expressed as:
\begin{equation}
    \theta^* = \max_\pi \lim_{T\to \infty} \frac{1}{T} \mathbb{E}_{\tau \sim{p, \pi, \mu}} \sum_{t=1}^{T} \left( r(s_t,a_t) - \beta^{-1} \log\frac{\pi(a_t|s_t)}{\pi_0(a_t|s_t)}\right). \label{eq:theta-eqn}
\end{equation}
Here, $\theta^*$ takes the role of the reward-rate $\rho^*$, but includes the entropic cost. 
A finite temperature ${\beta < \infty}$ can be used throughout, treating the regularized objective as a surrogate for the un-regularized solution of interest, corresponding to $\beta \to \infty$. In the following sections, we discuss two methods for solving the un-regularized problem.

Recent work by~\cite{PRR} provides a solution to the entropy-regularized average reward problem through the construction of a ``tilted matrix''. We summarize the main results of their framework below. The key to solving the entropy-regularized average reward problem is the tilted matrix, defined as:
\begin{equation}
    \widetilde{P}(s',a'|s,a)=p(s'|s,a) \pi_0(a'|s') e^{\beta r(s,a)}, \label{eq:ptilda}
\end{equation}
which combines the transition dynamics, prior policy, and reward function. The tilted matrix is sub-stochastic (column sums are less than one) and satisfies the Perron-Frobenius theorem. The corresponding dominant eigenvalue of $\widetilde{P}$ is $\lambda=\exp{\beta\theta^*}$, i.e., the exponential of the entropy-regularized average reward-rate, $\theta^*$. These results can be understood as an extension of the linearly solvable framework of~\cite{todorov2006linearly}.
Furthermore, it was shown that the corresponding left eigenvector, satisfying $u^T \widetilde{P}=\lambda u^T$, encodes the optimal policy: $\pi^*(a|s) \propto \pi_0(a|s) u(s,a)$. The distinct, non-trivial right eigenvector $v$ satisfying $\widetilde{P}v=\lambda v$ represents a ``quasi-stationary distribution''~\citep{Mlard2012}. These left and right eigenvectors are related to the backward and forward messages in the Bayesian interpretation, respectively, as discussed by~\cite{levine2018reinforcement, argenis-thesis}. The Perron-Frobenius theorem implies that both vectors $u$ and $v$ are unique and strictly positive. Importantly, the steady-state distribution induced by the optimal policy $\pi^*$ and transition dynamics $p$ can be expressed through the eigenvectors' (Hadamard) product:
\begin{equation}
    d_{p,\pi^*}(s,a) = u(s,a)v(s,a).\label{eq:steady-state}
\end{equation}
This decomposition is similar to that given by~\cite{jain2023maximum}. However, our approach is Markovian, in the average-reward framework, and as shown below, yields a new update equation based on temporal differences.
In the following, we discuss how the ideas from this section can be combined to unlock new insights and algorithms for the maximum-entropy exploration problem.

\section{Results}
We begin with the observation that, rather than using rollouts to estimate the policy-induced entropy, the steady-state distribution can be derived from the eigenvectors of the tilted matrix, attainable from off-policy methods.
Given the linear-algebraic framework discussed above, we can rewrite the intrinsic reward function which optimizes the average entropy-rate of the agent.

Rather than using discounted occupancy measures, this framework allows the average entropy to be optimized directly. 
Comparing with Equation~\eqref{eq:theta-eqn} we see that by choosing the reward function to be of the form 
\begin{equation}
    r(s,a) = -\log u(s,a) v(s,a),\label{eq:reward-defn}
\end{equation}

the optimal reward-rate is exactly the desired entropy-rate. The right eigenvector $v$ can be computed from Equation~\eqref{eq:self-consistent} at any point to estimate the current policy's distribution, and its entropy without any rollouts. This approach is therefore off-policy: data is only needed from the prior policy, not the learned policy. This is in stark contrast to rollout-based methods that require extensive sampling from the online policy.
However, there are two caveats: (1) this is still an implicit solution, where the left and right eigenvectors $u, v$ of the new tilted matrix must be self-consistent with the vectors used to define the reward function; and (2) an exact correspondence to the maximum entropy distribution only holds for the un-regularized problem, where $\beta\to\infty$. 
We address these issues in the following sections.

\subsection{Self-Consistent Solution}
To address the first problem (the self-referential nature of eigenvectors and reward functions), we must find left and right eigenvectors $u,v$ which consistently satisfy the corresponding eigenvector equations and the reward function definition introduced in Section~\ref{sec:background}. After inserting the definition for the reward function (Equation~\eqref{eq:reward-defn}) into the eigenvector equations, we have for $\beta=1$, (see Appendix~\ref{app:proofs} for further details):
\begin{align}
    \sum_{s',a'} u(s',a')P^{\pi_0}{(s',a'|s,a)} &= \lambda u^2(s,a) v(s,a)\ ,\\
    \sum_{s,a} P^{\pi_0}{(s',a'|s,a)}\frac{1}{u(s,a)}&=\lambda v(s',a') \ , \label{eq:self-consistent}
\end{align}
where $P^{\pi_0}{(s',a'|s,a)}=p(s'|s,a)\pi_0(a'|s')$ is the joint state-action transition dynamics and ${0<\lambda<1}$ is the dominant (Perron-Frobenius) eigenvalue, which encodes the optimal entropy-regularized reward rate, $\theta^*$, as discussed above. Note that this optimal reward-rate $\theta^*$ includes a contribution from the average entropy achieved by the policy's steady-state distribution (our original objective) \textit{and} the entropy of the optimal policy itself. In the next subsection, we discuss how to mitigate the latter term's effect.

We now note that Equation~\eqref{eq:self-consistent} allows one to eliminate the right eigenvector $v$. From an algorithmic standpoint, this reduces the required memory and simplifies the problem to estimating a single function. The relationship derived from the above eigenvector equations now provides our novel fixed point iteration scheme, which for general $\beta \neq 1$, reads: 
\begin{equation}\label{eq:u-update}
    u(s,a)\leftarrow\mathcal{T} u(s,a) \doteq \left(\frac{\left(\sum_{s',a'} u(s',a') P^{\pi_0}(s',a'|s,a)\right)^{\frac{1}{\beta}}}{\sum_{\bar{s},\bar{a}} P^{\pi_0}(s,a|\bar{s},\bar{a}) u(\bar{s},\bar{a})^{-\frac{1}{\beta}} (\sum_{s,a} u(s,a) P^{\pi_0}(s,a|\bar{s},\bar{a}))^{\frac{1-\beta}{\beta}}}\right)^{\frac{\beta}{1+\beta}}
\end{equation}
where $(\bar{s},\bar{a})$, $(s,a)$, $(s',a')$ denote successive state-action pairs.
This update equation combines information from the future (numerator) and the past (denominator). To gain a better intuition, we can rewrite this equation in log-space (motivated by the connection to the value function, since {$q(s,a)=\beta^{-1}\log u(s,a)$}), and highlight the simpler case of $\beta=1$, finding:
\begin{equation}\label{eq:q-update}
\tcbhighmath[softbluebox]{
    q(s,a)= \frac{1}{2} \log \left(\mathbb{E}_{a'\sim{\pi_0}} e^{q(s',a')}\right)  - \frac{1}{2}  \log\left(\sum_{\bar{s},\bar{a}}P(s,a|\bar{s},\bar{a})e^{-q(\bar{s},\bar{a})}\right).
    }
\end{equation}
From here, we see that $q$ satisfies a certain ``soft flow'' equation: $q$ balances between the flows out of a state (first term) and the flows into the state (second term). The first term represents a soft maximum over the next state-action pairs while the second term represents a soft minimum over the preceding state-action pairs. 

The derived update scheme simplifies the expensive iteration (computing optimal policy, rollouts, new reward function, etc.) into a single fixed-point equation. Interestingly, even without a discount factor, 
successive iterates of EVE maintain their magnitude. This stability is a result of our convergence proof and the homogeneity of the updates (cf. Appendix~\ref{app:proofs}).
Importantly, since we do not rely on discounting, this flow equally weights incoming (previous) and outgoing (future) transitions, ensuring that temporally distant states are visited with uniform probability, and not artificially weighted by the discounted measure. Upon convergence, EVE therefore provides the policy which maximizes entropy of the \textit{undiscounted} distribution over state-action pairs.

The following theorem guarantees that, despite lacking an obvious contraction factor, the fixed point iteration in Equation~\eqref{eq:u-update} is indeed a contraction mapping and thus converges to a unique function, for all $\beta\geq 1$. Below, let $\kappa(A)$ denote the projective diameter of a positive matrix, $A$. We restrict the operation of $\mathcal{T}$ to the cone $\mathbb{R}^{|\mathcal{S}||\mathcal{A}|}_{>0}$ (the positive orthant).
\newcommand{\EVEConvergence}{%
Let the dynamics $p(s'|s,a)$ be irreducible and aperiodic, and denote by $m$ the index of primitivity for the Markov chain over state-actions induced by $\pi_0$. The mapping $u\leftarrow \mathcal{T}(u)$ given by Equation~\eqref{eq:u-update} is a contraction under the projective metric, and converges linearly to a unique fixed point with effective rate $\kappa((P^{\pi_0})^m)^{1/m}$ when $\beta\geq 1$.
}
\begin{thmbox}
\begin{theorem}[Convergence of EVE]
    \EVEConvergence
\end{theorem}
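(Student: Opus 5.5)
The argument works in Hilbert's projective metric $d_H(x,y)=\log\max_i(x_i/y_i)-\log\min_i(x_i/y_i)$ on the positive orthant. This metric is natural here because $\mathcal{T}$ is positively homogeneous, so only rays matter. Write $P=P^{\pi_0}$ and, abusing notation, let $P$ also act on column vectors as $(Pu)(s,a)=\sum_{s',a'}P(s',a'|s,a)u(s',a')$ and $P^{\top}$ act as the adjoint. Equation~\eqref{eq:u-update} then factors as a composition of elementary positive maps:
\begin{equation*}
\mathcal{T}(u)=\Big((Pu)^{1/\beta}\,\big/\,P^{\top}\big[u^{-1/\beta}(Pu)^{(1-\beta)/\beta}\big]\Big)^{\beta/(1+\beta)},
\end{equation*}
where powers, products and quotients are taken entrywise. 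I will use three standard facts about $d_H$. First, a nonnegative matrix $A$ without zero rows satisfies $d_H(Ax,Ay)\le d_H(x,y)$. If $A$ is strictly positive, Birkhoff's theorem strengthens this to $d_H(Ax,Ay)\le\tanh(\Delta(A)/4)\,d_H(x,y)=\kappa(A)\,d_H(x,y)$, where $\Delta(A)$ is the projective diameter of the image cone. Second, entrywise powers satisfy $d_H(x^t,y^t)=|t|\,d_H(x,y)$. Third, entrywise products and quotients satisfy $d_H(xz,yw)\le d_H(x,y)+d_H(z,w)$ and $d_H(x/z,y/w)\le d_H(x,y)+d_H(z,w)$.

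Set $\delta=d_H(u,w)$ and $\delta_P=d_H(Pu,Pw)\le\delta$. The numerator contributes at most $\delta_P/\beta$. The argument of $P^{\top}$ contributes at most $\delta/\beta+\frac{\beta-1}{\beta}\delta_P$; this is where $\beta\ge1$ is used, since it makes the exponent $(1-\beta)/\beta$ nonpositive so that its absolute value is $(\beta-1)/\beta$. Applying $P^{\top}$ (nonexpansive) and then the outer power $\beta/(1+\beta)$ gives
\begin{equation*}
d_H(\mathcal{T}u,\mathcal{T}w)\le\frac{\beta}{1+\beta}\Big(\frac{\delta_P}{\beta}+\frac{\delta}{\beta}+\frac{\beta-1}{\beta}\delta_P\Big)=\frac{\delta+\beta\,\delta_P}{1+\beta}\le\delta .
\end{equation*}
So $\mathcal{T}$ is always nonexpansive. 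It contracts strictly whenever $P$ contracts, because then $\delta_P\le\kappa\delta$ gives the factor $(1+\beta\kappa)/(1+\beta)<1$.

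The main obstacle is that $P$ itself is not strictly positive; only $P^m$ is, since $m$ is the primitivity index. My plan is to iterate $\mathcal{T}$ $m$ times and track the composite bound. I will unroll the recursion so that the factors of $P$ and $P^{\top}$ chain together, and use that $\kappa$ is submultiplicative. The goal is to show that every path in the expansion of $d_H(\mathcal{T}^m u,\mathcal{T}^m w)$ passes through an effective $m$-fold product of $P$ or $(P^{\top})^m$. Here $\kappa((P^{\top})^m)=\kappa(P^m)$, because the projective diameter of a positive matrix equals that of its transpose. This would give $d_H(\mathcal{T}^m u,\mathcal{T}^m w)\le c\,\kappa(P^m)\,d_H(u,w)$ with $c<1$ independent of $u$ and $w$. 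If the exact bookkeeping proves hard, a fallback is to apply Birkhoff to the positive matrix $P^{\top}$-then-$P$ compositions that appear after $m$ steps. In the worst case I would prove strict contraction of $\mathcal{T}^m$ with some factor less than $1$ and identify the asymptotic rate as $\kappa(P^m)^{1/m}$ per step.

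Given that $\mathcal{T}^m$ is a strict contraction on the projective space of the positive orthant, the conclusion follows from known results. This space is complete under $d_H$ for finite-dimensional cones when rays are normalized, say by $\|u\|_1=1$. Banach's fixed point theorem then yields a unique fixed ray, and homogeneity of $\mathcal{T}$ (degree one after accounting for exponents) fixes its scale. Linear convergence at the effective rate $\kappa(P^m)^{1/m}$ per iteration follows from the $m$-step contraction together with nonexpansiveness of the intermediate steps. Finally, uniqueness identifies the fixed point with the self-consistent $u$ of Section~3.1.
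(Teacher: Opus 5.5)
Your one-step estimate is correct and uses the same decomposition as the paper: the numerator, the inner factor $u^{-1/\beta}(Pu)^{(1-\beta)/\beta}$, the outer adjoint, and the outer power. But with $P$ not positive it only gives nonexpansiveness. The step that carries the theorem, strict contraction of $\mathcal{T}^m$ with factor $\kappa(P^m)$, is only a plan, and as sketched it does not work.

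Birkhoff's bound $d_H(Ax,Ay)\le\kappa(A)\,d_H(x,y)$ needs a single linear map acting on a whole vector. In $\mathcal{T}^m$, consecutive applications of $P$ or $P^\top$ are separated by entrywise powers, quotients and multiplication by $u^{-1/\beta}$. Once you split these with the product/quotient rule, you can only invoke $\kappa(P)$ or $\kappa(P^\top)$ one factor at a time. Both equal $1$ when $P$ has a zero entry, so no factor $\kappa(P^m)$ can ever be extracted.

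The path picture also fails on its own terms. Each step produces the words $P$, $P^\top$ and $P^\top P$, so the expansion contains alternating words such as $(P^\top P)^k$ and $(PP^\top)^k$. For deterministic dynamics these are never positive. $P^\top P$ links two state-action pairs only if they share a parent, i.e.\ sit at the same state. $PP^\top$ links them only if they share a child, i.e.\ lead to the same next state. Both relations are equivalence relations, so no power of either matrix becomes positive when $|\mathcal{S}|>1$.

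Hence neither the claim that every path passes through $P^m$ or $(P^\top)^m$, nor the fallback of applying Birkhoff to $P^\top$/$P$ compositions, survives. A rigorous version would have to work with the Jacobian of $\mathcal{T}^m$. Its $u$-dependent weights (e.g.\ $P_{ki}u_i/(Pu)_k$) are not bounded below uniformly on the cone, and the proposal supplies no such control.

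Even when $P>0$, your bookkeeping gives the factor $\frac{1+\beta\kappa}{1+\beta}$, which exceeds $\kappa$ by $\frac{1-\kappa}{1+\beta}$. This is because you treat the outer $P^\top$ as merely nonexpansive, so the $u^{-1/\beta}$ branch is never contracted. The paper applies Birkhoff to that outer matrix as well and gets $C(\beta,\tau)=\frac{2\tau+\tau^2(\beta-1)}{1+\beta}$, with $C-\tau=\frac{\tau(\beta-1)(\tau-1)}{1+\beta}\le 0$. So your estimates cannot reach the advertised rate $\kappa(P^m)^{1/m}$ even granted an $m$-step reduction.

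The paper does not supply your missing step either. It handles non-positivity by substituting $M=(P^{\pi_0})^m$ for $P^{\pi_0}$ inside $\mathcal{T}$ and running the one-step estimate on that operator. That is a different map from $\mathcal{T}^m$, so imitating it would not close the gap; you need an argument about $\mathcal{T}^m$ itself.

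A smaller point: homogeneity does not fix the scale of the fixed point. It makes fixed points come in rays, so uniqueness is only projective. A fixed ray $\mathcal{T}u=\mu u$ is an actual fixed point only after you show $\mu=1$. To see this, write the ray equation entrywise as $\mu^{(1+\beta)/\beta}\,u\,(P^\top f)=u^{-1/\beta}(Pu)^{1/\beta}$ with $f=u^{-1/\beta}(Pu)^{(1-\beta)/\beta}$. Summing and using $\langle u,P^\top f\rangle=\langle Pu,f\rangle$ gives $\mu^{(1+\beta)/\beta}=1$.
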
\label{thm:eve-convergence}
\end{thmbox}
The proof of this statement is given in Appendix~\ref{app:proofs} and follows from the properties of Hilbert's projective metric. Intuitively, this can be seen as a form of non-linear Perron-Frobenius theory~\citep{lemmens2012nonlinear}. The projective metric is a logarithmic form of the more well-known \textit{span semi-norm} common in the average-reward literature~\citep{zhang2021finite}.

\subsection{The Un-regularized Objective}
The entropy-regularized objective maintains an additional cost for deviating from a prior policy $\pi_0$. This regularization technique has proven beneficial in some contexts, like MaxEnt RL where $\pi_0$ is uniform~\citep{SAC1}, offline RL where $\pi_0$ represents a behavior policy~\citep{wu2019behavior, zhang2024implicit}, and language model  alignment~\citep{rafailov2024direct, yan2024efficient}. In the exploration problem, it may be of reasonable interest to use some (extrinsically defined) task's optimal policy as the prior. In this case, the agent will explore novel parts of the environment while staying ``close'' (in the sense of Kullback-Leibler divergence) to the prior policy, which may be an effective combination to satisfy the exploration-exploitation tradeoff. 
Though the regularized objective may be of independent interest, our primary objective is to obtain a solution to the maximum entropy exploration problem, as posed in Equation~\eqref{eq:entropy-objective}, without the additional regularization term. As such, maintaining such a regularization will bias the solution away from the pure MaxEnt solution.

In order to directly use the tilted matrix to determine the solution of the un-regularized problem, we need to take the limit $\beta \to \infty$.
However, instead of increasing the inverse temperature $\beta$, one can alternatively anneal in the policy space, by using an adaptive prior policy $\pi_0$, as introduced by~\cite{rawlik2012stochastic, rawlik-thesis}. Adopting this approach, we iterate the prior and posterior policies, by updating $\pi_0$ with the optimal policy resulting from the left eigenvector ($\pi^* \propto\pi_0u$). This left eigenvector, for a given $\pi_0$, is found by solving for the fixed point in Equation~\eqref{eq:u-update}. Intuitively, this method reduces the impact of the relative entropy regularization term, not by adjusting its weight $\beta$ but instead by adjusting the reference policy. Upon convergence, the prior and optimal policies will be the same, thereby incurring no additional entropic costs and only maximizing the reward function of interest (Eq.~\eqref{eq:reward-defn}). This method is known as \textit{posterior policy iteration}, or PPI~\cite{rawlik2012stochastic}, and was also studied in the deep RL setting by~\cite{adamczyk2025eval}. In Appendix~\ref{app:ppi} we prove the convergence of PPI in this setting, where the reward function also changes.
We provide the corresponding pseudocode for EVE below.

\begin{algorithm}[H]
\caption{EVE}
\label{alg:entropy-maximization}
\begin{algorithmic}[1]
\Require Transition matrix $P(s'|s,a)$, initial prior policy $\pi_0(a|s)$
\Require Inverse temperature $\beta(t) \geq 1$, maximum $u$ iterations $N$, maximum PPI iterations $T$

\For{$t = 1$ \textbf{to} $T$}
    \For{$n = 1$ \textbf{to} $N$}
    \State $u \leftarrow \mathcal{T}\left( u ; \beta(t) \right)$ \quad \texttt{// Update according to Equation~\eqref{eq:u-update}}
    \EndFor
    \State Update optimal policy: $\pi^*(a|s) = \frac{\pi_0(a|s) u(s,a)}{\sum_a\pi_0(a|s) u(s,a)}$
    \State $\pi_{0}(a|s) \gets \pi^*(a|s)$ \quad \texttt{// Update prior policy}
\EndFor
\newline 
\noindent
\textbf{Output:} Exploration policy, $\pi^*$
\end{algorithmic}
\end{algorithm}

\section{Experiments}

We use tabular GridWorld environments to verify our theoretical results and test the proposed method (given in Algorithm~\ref{alg:entropy-maximization}). Specifically, we use deterministic dynamics with four discrete actions (in the cardinal directions). The initial state is denoted by a green circle and absorbing states are denoted by the grey region (transitions the agent back to the initial state). Although we use a tabular approach (with transition matrix given), the agent can alternatively learn a model~\citep{sutton1991dyna, hafner2019learning, moerland2023model} during training. Notably, because of the form of the update in Equation~\eqref{eq:u-update}, one must also model the backward transitions, or the ``parent'' state-action pairs.

As baselines, we consider the MaxEnt algorithm from~\cite{hazan2019provably} and a set of rollout-based techniques where a policy's steady-state distribution is calculated, and a new reward function is defined according to $r(s,a)=-\log d_\pi(s,a)$. Experimentally, we observe oscillatory behaviors in the rollout-based methods, as outlined in~\citep{lee2019efficient}. To mitigate this effect, we introduce a learning rate (finetuned for each baseline) to slowly mix the consecutive reward functions. We also warmstart the RL loop with the previous value function. We found both of these steps were necessary for stable convergence of the baselines. We use a discounted approach, where the reward function from the current stage is optimized with discounted soft Q learning ($\gamma \in \{0.8, 0.9, 0.95, 0.99\}$). We also use an average-reward differential soft Q learning algorithm to isolate the effects of discounting and our own update equation. The soft Q learning baselines use a fixed number of steps ($50$) before the reward function is updated and a linear schedule for $\beta\in\{1,2,\dotsc,10\}$. On the horizontal axis, ``iterations'' refers to the number of synchronous updates to the value function (i.e. value iteration steps). 

Importantly, EVE does not require tracking a distribution or reward function, since these are built into the update equation, by design. As such, there are no oscillatory effects, so the corresponding learning rate can be removed altogether. The MaxEnt algorithm uses a convex combination of greedy policies to form a stochastic optimal policy, whereas the EVE policy is naturally stochastic at each iteration. As such, we circumvent the potentially large memory footprint of MaxEnt, which requires storing all previous policies.

\begin{figure}
    \centering
    \includegraphics[width=0.9\linewidth]{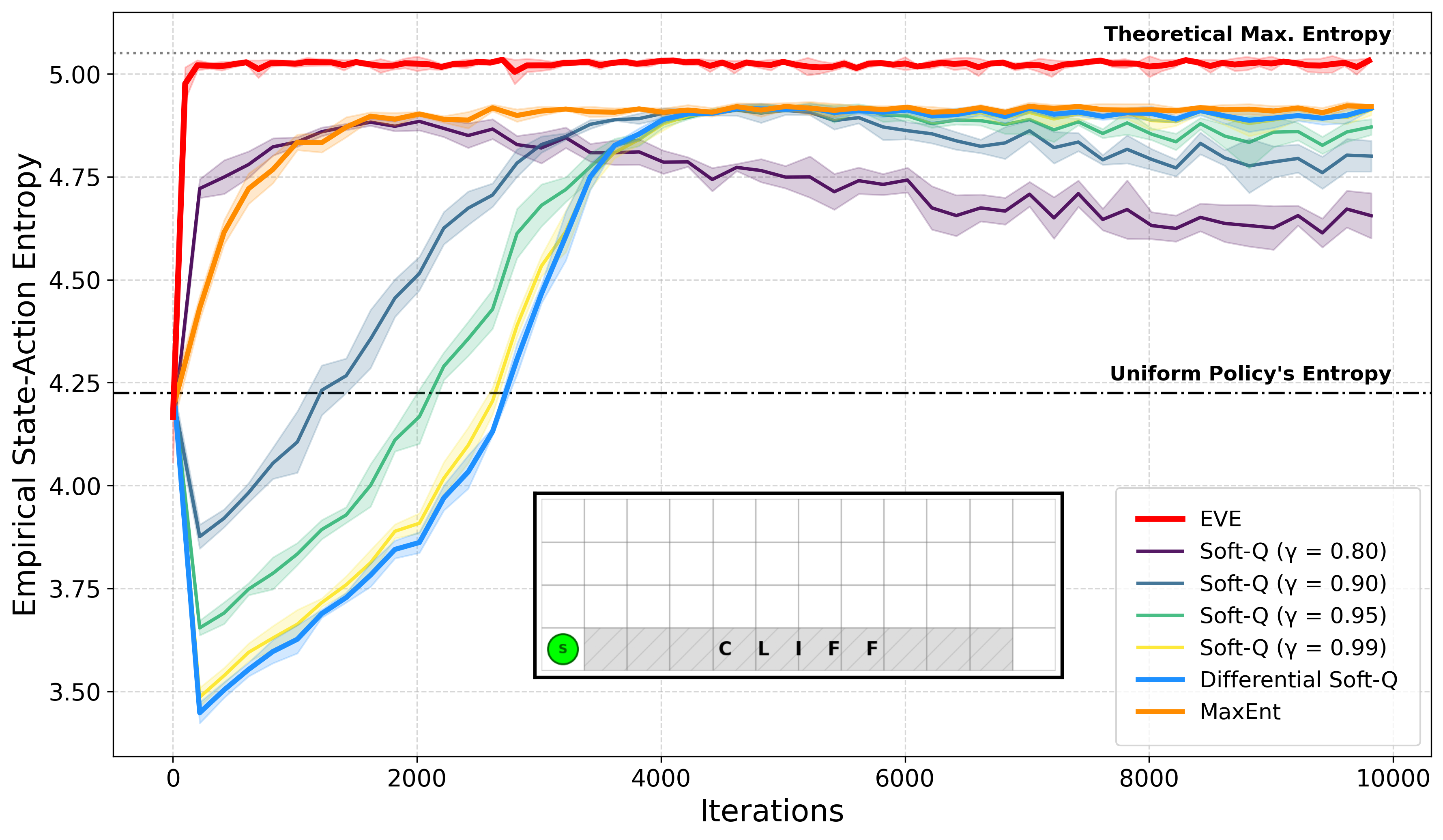}
    \caption{EVE converges to an exploration policy that achieves maximum entropy. Compared to the baselines, the optimal policy found by EVE produces a higher entropy and converges much faster. (Inset) ``CliffWorld'' environment used. The green circle denotes the initial state; stepping into the cliff resets the agent. Each line represents the mean over 5 independent initializations and the shaded region denotes one standard deviation.}
    \label{fig:cliff}
\end{figure}

Our experiments in Figure~\ref{fig:cliff} show that EVE is able to find policies capable of efficient exploration: nearly obtaining the maximum possible entropy, $\log |\mathcal{S}||\mathcal{A}|$. We note that in many environments it is not possible to attain this value for any policy due to the constraints imposed by the transition dynamics. 
To induce a uniform distribution, EVE's resulting policy is highly non-uniform, as the agent must navigate away from the cliff to continue exploring. The policies learned by rollout methods lead to lower entropy (worse coverage) and take more iterations to obtain. The code to reproduce these experiments can be found at \url{https://github.com/JacobHA/EVE}.

\section{Related Work}
Being a fundamental pillar of reinforcement learning, the issue of exploration has cemented itself as a cornerstone of theoretical and experimental RL literature, with ever growing interest, especially as environments become more complex and rewards more sparse. A prominent line of work formulates exploration as the maximization of entropy over state or state-action visitation distributions. For instance, \cite{hazan2019provably} demonstrate that maximizing the entropy of discounted occupancy measures can be solved as a convex program with polynomial sample complexity, marking an important step toward provably convergent exploration. Similarly, \cite{jain2023maximum} explore the maximum entropy problem, but with non-Markovian policies within a discounted framework. The reliance on discounting in these approaches inherently skews the visitation measure away from the true long-run steady-state. Consequently, they are unable to leverage the exact spectral connection to the stationary distribution's eigenvectors that our undiscounted, average-reward formulation natively exploits. Table 1 of \cite{mutti2022importance} provides a comprehensive overview of exploration methods.

Spectral methods have also been fruitfully applied to RL exploration in other contexts. \cite{machado2017eigenoption} utilize the eigenvectors of the successor representation to discover ``eigenoptions'' that encourage diverse exploratory behaviors. While conceptually related, EVE differs fundamentally in both objective and mechanism: rather than learning options to implicitly aid exploration, we focus on explicitly optimizing Equation~\eqref{eq:entropy-objective}, deriving a closed-form update from the eigenvectors of the tilted transition matrix. 

Finally, the precise formulation of the entropy objective and its optimization landscape separates our work from previous approaches. While much of the literature focuses strictly on state entropy, recent studies emphasize the importance of maximizing entropy across the joint state-action space \citep{zhang2021exploration}. Whereas \cite{zhang2021exploration} tackles this using R\'enyi entropy within a total-reward framework, EVE employs Shannon entropy within the average-reward setting. Overall, many of these experimental methods use function approximation techniques and thus could offer a viable pathway for extending EVE to continuous, model-free problems.

\section{Discussion}
EVE offers a principled, computationally efficient solution to the exploration problem by directly optimizing the steady‐state entropy of state‐action visits while avoiding costly rollouts or discounted surrogates. 
By leveraging recent analytical advances in entropy-regularized average-reward RL, we derived a self‐consistent update mechanism for solving the MaxEnt exploration problem. Because our formulation is naturally embedded in the entropy-regularized setting, we can leverage the flexibility of the prior policy $\pi_0$, obtaining a smooth transition from a regularized exploration policy to the un-regularized maximum-entropy solution via PPI.  

Beyond the new perspective and theoretical techniques provided for the pure exploration problem, EVE can also serve as a powerful pretraining objective~\cite{liu2021behavior}. An agent initialized with this maximum entropy policy can uniformly cover the state-action space, which is especially valuable in sparse-reward environments. We believe that EVE has the potential to become a versatile building block for both reward-free data collection and downstream RL fine-tuning, thanks to its broad coverage and efficient learning.

\subsection{Limitations and Future Work}
The solution method proposed is designed to solve the maximum entropy problem. As such, it is not well-suited for ``noisy TV'' problems~\citep{burda2018exploration} which may instead require a more information-theoretic perspective on the intrinsic reward function. The current solution method only applies for the case of deterministic dynamics, based on the results in~\citep{PRR}; however an extension to stochastic dynamics exists and requires an additional loop over ``biasing functions''~\citep{UAI}. 

We also note that objectives beyond the Shannon entropy may be of interest~\citep{hazan2019provably}. When these alternate objectives depend on the stationary distribution, our results may also apply. For example, the marginal-matching problem~\cite{lee2019efficient} and state-entropy can also be readily addressed within this framework.
We envision extensions of these ideas in model-based RL, where a learned (backward) dynamics model is used to estimate the EVE update equation.
\section*{Acknowledgements}
RVK and JA acknowledge funding support from the NSF through Award No. PHY-2425180. This work is supported by the National Science Foundation under Cooperative Agreement PHY-2019786 (The NSF AI Institute for Artificial Intelligence and Fundamental Interactions, {\url{https://iaifi.org/}}). 


\bibliography{sample}
\bibliographystyle{rlj}

\clearpage
\appendix
\section{Proofs}\label{app:proofs}
\subsection{Derivation of Equation~\eqref{eq:self-consistent}}

Recall the definition of the tilted matrix, $\widetilde{P}$, from Equation~\eqref{eq:ptilda}. We substitute in the self-consistent reward function $r(s,a)=-\log u(s,a) v(s,a)$, where $u$ and $v$ are the left and right eigenvectors of $\widetilde{P}$, respectively.

\begin{equation*}
    \lambda u(s,a) = \sum_{s',a'} u(s',a')\frac{P^{\pi_0}(s',a'|s,a)}{(v(s,a)u(s,a))^\beta}\quad \text{and}\quad 
    \lambda v(s',a') = \sum_{s,a} v(s,a)\frac{P^{\pi_0}(s',a'|s,a)}{(v(s,a)u(s,a))^\beta},
\end{equation*}
which can be written as:
\begin{align}
    \sum_{s',a'} u(s',a')P^{\pi_0}(s',a'|s,a) &= \lambda u^{1+\beta}(s,a) v^\beta(s,a)\label{eq:app-evec1}\\
    \sum_{s,a} P^{\pi_0}(s',a'|s,a)u^{-\beta}(s,a)v^{1-\beta}(s,a)&=\lambda v(s',a').\label{eq:app-evec2}
\end{align}

\subsection{Derivation of Equation~\eqref{eq:u-update}}

For ease of notation, let $i, j, \text{and } k$ be vector indices which correspond to the sequential state-action pairs $(\bar{s}, \bar{a})$, $(s,a)$, and $(s',a')$, respectively. Accordingly, let $P_{ji}$ denote $P^{\pi_0}(s,a|\bar{s}, \bar{a})$, and let $P_{kj}$ denote $P^{\pi_0}(s',a'|s,a)$. Equations~\eqref{eq:app-evec1} and ~\eqref{eq:app-evec2} are written more compactly as:
\begin{align}
    &\sum_k u_k P_{kj} = \lambda u^{1+\beta}_j v^\beta_j \label{eq:app-evec3}\\
    &\sum_i P_{ji}u_i^{-\beta}v_i^{1-\beta}=\lambda v_j\label{eq:app-evec4},
\end{align}
where for Equation~\eqref{eq:app-evec3} we have also shifted the timestep to align with the right eigenvector in Equation~\eqref{eq:app-evec4}. Now, by solving for the right eigenvector in terms of the left, we have, for the two state-action pairs,
\begin{equation}
    v_j = \left(\frac{1}{\lambda} \frac{(u^T P)_j}{ u_j^{\beta+1}}\right)^{\frac{1}{\beta}} \quad \textrm{and} \quad v_i^{1-\beta}= \left(\frac{1}{\lambda} \frac{(u^T P)_i}{ u_i^{\beta+1}}\right)^{\frac{1-\beta}{\beta}}\label{vj}.
\end{equation}

Substituting these expressions into Equation~\eqref{eq:app-evec4}, we have
\begin{equation}
    \sum_i P_{ji}u_i^{-\beta}\left(\frac{1}{\lambda} \frac{(u^T P)_i}{ u_i^{\beta+1}}\right)^{\frac{1-\beta}{\beta}}=\lambda \left(\frac{1}{\lambda} \frac{(u^T P)_j}{ u_j^{\beta+1}}\right)^{\frac{1}{\beta}}
\end{equation}
The eigenvalue cancels, and we can solve for $u_j=u(s,a)$ as:

\begin{equation}
    u_j^{(1+\beta)/\beta} = \frac{\left(\sum_k u_k P_{kj}\right)^{1/\beta}}{\sum_i P_{ji} u_i^{-1/\beta} (u^T P)_i^{(1-\beta)/\beta}}
\end{equation}
or, written in terms of state-action pairs, for $\beta=1$,
\begin{equation}
    u(s,a) = \sqrt{ \frac{ \sum_{s',a'} u(s',a') P(s',a'|s,a)}{\sum_{\bar{s},\bar{a}} u^{-1}(\bar{s},\bar{a}) P(s,a|\bar{s},\bar{a}) }}.
\end{equation}


\subsection{Convergence of EVE} \label{}
\begin{lemma}\label{lem:primitivity}
Let the prior policy during posterior policy iteration (PPI), be denoted $\pi_0(t)$. Then, the index of primitivity, $m(t)=m$, for the matrix $P^{\pi_0(t)}$ is a constant.
\end{lemma}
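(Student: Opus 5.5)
The plan is to show that the sparsity pattern of $P^{\pi_0(t)}$ does not change along PPI. The index of primitivity of a nonnegative primitive matrix depends only on its zero/nonzero pattern, i.e.\ on the directed graph it induces. Recall $P^{\pi_0(t)}(s',a'|s,a)=p(s'|s,a)\,\pi_0(t)(a'|s')$. The factor $p(s'|s,a)$ is fixed by the environment. So it suffices to prove that $\pi_0(t)(a'|s')>0$ if and only if $\pi_0(0)(a'|s')>0$, for every $t$ and every state-action pair. The statement is presumably meant for an initial prior with full support (e.g.\ uniform), in which case we must show that every iterate keeps full support.

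\textbf{Step 1: strict positivity of the fixed point.} For a fixed prior, the left eigenvector $u$ returned by the inner loop is strictly positive on the support of the prior. I would get this from the Perron--Frobenius facts recalled in Section~\ref{sec:background}. The tilted matrix $\widetilde P$ has the same pattern as $P^{\pi_0}$, because $e^{\beta r}>0$ for any finite reward $r=-\log uv$. It is also primitive whenever $P^{\pi_0}$ is. Hence its dominant left eigenvector is strictly positive. Alternatively, directly from Equation~\eqref{eq:u-update}: $\mathcal{T}$ maps the positive orthant to itself. The numerator is a positive combination of positive entries, because each $(s,a)$ has successors under $p$ and the prior has full support. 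The denominator is finite and positive, because irreducibility gives each $(s,a)$ a predecessor. The fixed point from Theorem~\ref{thm:eve-convergence} lies in the open cone and is therefore strictly positive.

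\textbf{Step 2: induction on $t$.} The PPI update is $\pi_0(t+1)(a|s)=\pi_0(t)(a|s)u_t(s,a)/\sum_b\pi_0(t)(b|s)u_t(s,b)$. Since $u_t>0$, we have $\pi_0(t+1)(a|s)>0$ if and only if $\pi_0(t)(a|s)>0$. The supports therefore coincide for all $t$. Consequently, the boolean matrix of $P^{\pi_0(t)}$ is independent of $t$. Primitivity is preserved, and the minimal $m$ with $(P^{\pi_0(t)})^m>0$ is the same for all $t$: the entries of a power are sums of products along paths, and these are positive exactly when a path exists in the common graph.

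\textbf{Main obstacle.} The delicate point is Step 1 in the limit: ensuring that $u_t$ stays strictly positive and finite, rather than only in the limit $t\to\infty$, where PPI may drive some probabilities toward zero. For each finite $t$ this is fine, because we use the exact, or finitely iterated, positive $u_t$. After finitely many applications of $\mathcal{T}$ from a positive initialization, positivity is immediate by the cone-preservation argument. I would therefore phrase the lemma for each finite $t$ and note explicitly that it holds whether the inner loop is run to convergence or truncated at $N$ iterations.
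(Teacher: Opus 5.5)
Your proposal is correct and takes the same approach as the paper: the index of primitivity depends only on the zero pattern of $P^{\pi_0(t)}$. That pattern cannot change because $p$ is fixed and each PPI step preserves the support of the prior. Your Step 1 (strict positivity and finiteness of $u_t$, via cone preservation of $\mathcal{T}$) supplies the reverse support inclusion, which the paper's appeal to $\pi_0(t+1)\ll\pi_0(t)$ alone does not give, so your version is, if anything, more complete.
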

Notably, this result also holds when a softened version of PPI is used: i.e. if a rolling average of policies is used rather than ``hard updates'', the result remains, because the primitivity is a topological property of the connectivity of the underlying state-action graph. The value of $m$ is based only on the minimum number of steps for which there is a positive probability of transitioning between any two state-action pairs. Since iterations of PPI yield policies with the same support (i.e. absolute continuity holds: $\pi_0(t+1) \ll \pi_0(t)$), various policies cannot affect the number of steps required, only the corresponding probabilities.

\begin{theorem*}
    \EVEConvergence
\end{theorem*}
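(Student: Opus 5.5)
The plan is to work throughout in Hilbert's projective metric $d_H(x,y)=\log\max_i(x_i/y_i)-\log\min_i(x_i/y_i)$ on the positive orthant. This is the span seminorm of $\log x-\log y$, so on the scale-invariant quotient it is a genuine metric, and that quotient is complete. I will decompose $\mathcal{T}$ into elementary maps whose Lipschitz constants in $d_H$ are known, and then multiply these constants. Writing $P=P^{\pi_0}$ and using the index notation of the derivation of Equation~\eqref{eq:u-update}, we have
\begin{equation*}
\mathcal{T}(u)=\left(\frac{(P^Tu)^{1/\beta}}{P\bigl(u^{-1/\beta}\odot (P^Tu)^{(1-\beta)/\beta}\bigr)}\right)^{\beta/(1+\beta)},
\end{equation*}
with all powers and quotients taken entrywise. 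The elementary facts I need are the following.
\begin{itemize}
\item[(i)] A nonnegative matrix with no zero rows or columns is nonexpansive in $d_H$. By Birkhoff's theorem, a strictly positive matrix $A$ contracts with coefficient $c(A)=\tanh(\kappa(A)/4)<1$.
\item[(ii)] The power map $x\mapsto x^t$ is Lipschitz with constant $|t|$. In particular, inversion is an isometry.
\item[(iii)] For Hadamard products and quotients, $d_H(x\odot y,x'\odot y')\le d_H(x,x')+d_H(y,y')$, because the log-span is subadditive.
\end{itemize}
By irreducibility and aperiodicity, $P$ is primitive, so $P^m>0$. Lemma~\ref{lem:primitivity} guarantees that this $m$ does not change along PPI.

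Fix $u,u'$ and set $\delta=d_H(u,u')$. Suppose the linear steps contract by $c\le 1$, that is, $d_H(P^Tu,P^Tu')\le c\,\delta$, with $P$ applied afterwards being nonexpansive. Chaining (ii) and (iii) then gives
\begin{equation*}
d_H(\mathcal{T}u,\mathcal{T}u')\le\frac{\beta}{1+\beta}\left[\frac{c}{\beta}+\frac{1}{\beta}+\frac{\beta-1}{\beta}\,c\right]\delta=\frac{1+\beta c}{1+\beta}\,\delta .
\end{equation*}
Here the hypothesis $\beta\ge 1$ enters. It makes the exponent $(1-\beta)/\beta$ nonpositive with modulus $(\beta-1)/\beta$, so the coefficients collected from the two terms involving $P^Tu$ add up to $c$ times a total weight of $\beta$. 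The same bound shows that $\mathcal{T}$ is always nonexpansive. It is a strict contraction as soon as $c<1$.

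The main obstacle is that $P$ itself is only primitive, not positive, so a single application of $\mathcal{T}$ need not contract, since we may have $c=1$. My first attempt would be to study the $m$-fold composition $\mathcal{T}^m$. The aim is to show that, along the nested chain of $m$ applications of $P^T$, each interleaved only with entrywise powers and nonexpansive maps, the accumulated linear action dominates that of the positive matrix $(P^T)^m$. This should give a Birkhoff-type bound $d_H(\mathcal{T}^mu,\mathcal{T}^mu')\le C\,c\bigl((P^{\pi_0})^m\bigr)\,\delta$. To get this, I would use the monotonicity and homogeneity of the intermediate maps, which are order-preserving or order-reversing and homogeneous of fixed degree. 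These properties let one compare them with linear maps on the cones $\{x: \alpha u'\le x\le \gamma u'\}$, the standard device from nonlinear Perron--Frobenius theory~\citep{lemmens2012nonlinear}. If that comparison goes through, $\mathcal{T}^m$ is a strict contraction, and since $\mathcal{T}$ is nonexpansive the whole sequence converges.

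To finish, the Banach fixed-point theorem on the complete metric space (the projectivized orthant, $d_H$) gives a unique fixed ray. This ray is normalized by the homogeneity of $\mathcal{T}$ (degree $1$), which accounts for the magnitude stability noted in the main text. Convergence is geometric with per-step effective rate governed by the $m$-th root of the contraction coefficient of $(P^{\pi_0})^m$, expressed through its projective diameter $\kappa((P^{\pi_0})^m)$, matching the stated rate.
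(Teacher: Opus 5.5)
Your one-step estimate follows the paper's computation: the same split of $\mathcal{T}$ into the numerator, the intermediate vector $f=u^{-1/\beta}\odot(P^{T}u)^{(1-\beta)/\beta}$, the outer linear map and the exponent $\beta/(1+\beta)$, controlled by the same three properties of $d_H$. The paper also contracts the outer linear stage and gets $\frac{2\tau+\tau^{2}(\beta-1)}{1+\beta}$. Your $\frac{1+\beta c}{1+\beta}$ is weaker but suffices once $c<1$.

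The genuine gap is the step you flag yourself. Nothing in the proposal shows that $\mathcal{T}^m$ is a strict contraction, and the rate $\kappa((P^{\pi_0})^m)^{1/m}$ in your last paragraph is asserted, not derived. The mechanism you sketch does not supply it:
\begin{itemize}
\item Whenever $P^{\pi_0}$ has a zero entry (always, for deterministic dynamics with more than one state), both $P^{\pi_0}$ and its transpose have Birkhoff coefficient exactly $1$.
\item Chaining Lipschitz constants stage by stage reduces each stage to a scalar bound. A later linear map can then never be merged with an earlier one into $(P^{\pi_0})^m$, and the product of your per-step constants is exactly $1$.
\item The linearization shows why order-interval comparison will not give a uniform constant either. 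In log coordinates the Jacobian of $\mathcal{T}$ is $J(u)=\frac{1}{1+\beta}\bigl[A(u)+B(u)+(\beta-1)B(u)A(u)\bigr]$. Here $A(u)_{jk}=P_{kj}u_k/(P^{T}u)_j$ is row-stochastic with the forward pattern, and $B(u)_{ji}=P_{ji}f_i/(Pf)_j$ is row-stochastic with the backward pattern.
\item So the linear stages alternate direction and their weights depend on $u$. For deterministic dynamics the word $BA$ never leaves the current state, so only part of $J^m$ carries the mixing of $(P^{\pi_0})^m$, and with $u$-dependent coefficients.
\end{itemize}
Comparison with linear maps on order intervals therefore yields constants that depend on $u$, not the uniform Birkhoff coefficient of $(P^{\pi_0})^m$.

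The paper does not resolve this obstacle; it sidesteps it. Its proof substitutes $M=(P^{\pi_0})^m>0$ for $P^{\pi_0}$ inside $\mathcal{T}$ from the start (``$m$-step returns''). It shows this modified operator contracts with factor $C(\beta,\tau)$, $\tau=\tanh(\kappa(M)/4)$. It then asserts $d_H(\mathcal{T}^{km}x,\mathcal{T}^{km}y)\le C(\beta,\tau)^{k}\Delta$ for the original operator.

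Since $\mathcal{T}$ is nonlinear, $\mathcal{T}^m$ is not the operator built from $(P^{\pi_0})^m$. Take $\beta=1$ and the permutation matrix with $P_{kj}=1$ iff $k=\sigma(j)$. Then $\mathcal{T}(u)_j=\sqrt{u_{\sigma(j)}u_{\sigma^{-1}(j)}}$, so $\mathcal{T}^2(u)_j=(u_{\sigma^2(j)}u_j^{2}u_{\sigma^{-2}(j)})^{1/4}$, whereas the operator built from $P^2$ gives $\sqrt{u_{\sigma^2(j)}u_{\sigma^{-2}(j)}}$.

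So you have two options:
\begin{itemize}
\item Adopt the paper's reading and iterate the operator built from $M$. Then your computation with $c=\tau<1$ already finishes the proof, but for a different iteration than the one in the statement.
\item Prove the claim for the one-step operator, which needs a different idea. One route that gives convergence, though not the stated uniform rate: $J(u)\ge\frac{1}{1+\beta}B(u)$ entrywise, and any product of $m$ matrices with the sparsity pattern of $P^{\pi_0}$ is strictly positive. So the Jacobian of $\log\mathcal{T}^m$ is a strictly positive stochastic matrix at every point. Integrating it along the segment from $\log y$ to $\log x$ gives $d_H(\mathcal{T}^m x,\mathcal{T}^m y)<d_H(x,y)$ for distinct rays. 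This yields uniqueness and, given a fixed point, convergence, but only at a $u$-dependent rate.
\end{itemize}
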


\begin{proof}
We follow the techniques discussed in~\cite{lemmens2014birkhoff}.
First we show that $\mathcal{T}$ is positive, homogeneous, and monotone for all $\beta>1$. We focus on the case of ``$m$-step returns'', where the one-step transition matrix $P^{\pi_0}$ is replaced by $M\doteq (P^{\pi_0})^m$, where $m$ is the smallest integer (number of steps in the MDP) such that $M>0$; ensuring as discussed above, that all state-action pairs are reachable from one another. 
\paragraph{Positive}
Positivity follows trivially since $M>0$: Both the numerator and denominator in Equation~\eqref{eq:u-update} are strictly positive for any $u > 0$ and all real values of $\beta$.

\paragraph{Homogeneous}
Let $\mathcal{T}(u)$ denote the operator defined by Equation~\eqref{eq:u-update}. To show that the operator is homogeneous, we evaluate $\mathcal{T}(cu)$ for a constant $c > 0$:
\begin{align*}
[\mathcal{T}(cu)]_j &= \left( \frac{ (M(cu))_j^{\frac{1}{\beta}}}{\sum_i M_{ji} (cu)_i^{-\frac{1}{\beta}} (M(cu))_i^{\frac{1-\beta}{\beta}}} \right)^{\frac{\beta}{1+\beta}}
\end{align*}
Factoring the scalar $c$ out of the linear matrix multiplications yields:
\begin{align*}
[\mathcal{T}(cu)]_j &= \left( \frac{ c^{\frac{1}{\beta}} (Mu)_j^{\frac{1}{\beta}}}{\sum_i M_{ji} c^{-\frac{1}{\beta}} u_i^{-\frac{1}{\beta}} c^{\frac{1-\beta}{\beta}} (Mu)_i^{\frac{1-\beta}{\beta}}} \right)^{\frac{\beta}{1+\beta}}
\end{align*}
Next, we consolidate the scalar $c$ in the denominator by combining its exponents throughout:
\begin{align*}
[\mathcal{T}(cu)]_j &= \left( c^{\frac{1+\beta}{\beta}} \right)^{\frac{\beta}{1+\beta}} [\mathcal{T}(u)]_j = c [\mathcal{T}(u)]_j
\end{align*}

Thus, the fixed-point operator $\mathcal{T}$ is homogeneous of degree $1$ for all $\beta > 0$.
\paragraph{Monotonicity}
A sufficient condition for the monotonicity of $\mathcal{T}$ requires $\frac{\partial [\mathcal{T}(u)]_k}{\partial u_i} \ge 0$ for all $u > 0$ and $M \ge 0$. Let $[\mathcal{T}(u)]_k = \left( \frac{N_k(u)}{D_k(u)} \right)^\delta$, where:
\begin{align*}
\delta &= \frac{\beta}{1+\beta}, \quad N_k(u) = (Mu)_k^{\frac{1}{\beta}}, \quad D_k(u) = \sum_j M_{kj} u_j^{-\frac{1}{\beta}} (Mu)_j^{\frac{1-\beta}{\beta}}.
\end{align*}
Since $\delta > 0$ for all $\beta > 0$, it is sufficient to show that $\frac{\partial N_k(u)}{\partial u_i} \ge 0$ and $\frac{\partial D_k(u)}{\partial u_i} \le 0$. For the numerator, applying the chain rule gives:
\begin{align*}
\frac{\partial N_k(u)}{\partial u_i} &= \frac{1}{\beta} (Mu)_k^{\frac{1-\beta}{\beta}} M_{ki} \ge 0.
\end{align*}

For $D(u)$, applying the product and chain rules yields:
\begin{align*}
\frac{\partial D_k(u)}{\partial u_i} &= \sum_j M_{kj} \left[ \left(-\frac{1}{\beta}\right) u_j^{-\frac{1+\beta}{\beta}} (Mu)_j^{\frac{1-\beta}{\beta}} \delta_{ji} + u_j^{-\frac{1}{\beta}} \left(\frac{1-\beta}{\beta}\right) (Mu)_j^{\frac{1-2\beta}{\beta}} M_{ji} \right].
\end{align*}

Since this term is in the denominator, the derivative must be non-positive for any $M > 0$. Hence, the scalar coefficients of both terms inside the sum must be non-positive, $\beta \geq 1$.

\paragraph{Contraction}
To analyze the convergence of $\mathcal{T}$, we operate in the projective space of strictly positive vectors $\mathbb{R}_{>0}^n$.
The \textit{projective metric} (also ``Hilbert's projective metric'') is defined for any $x, y \in \mathbb{R}_{>0}^n$ as:
\begin{equation}
    d_H(x,y) \doteq \ln \left( \max_{i} \frac{x_i}{y_i} \right) - \ln \left( \min_{i} \frac{x_i}{y_i} \right).
\end{equation}

This metric is naturally scale-invariant, meaning $d_H(cx, y) = d_H(x,y)$ for any scalar $c > 0$, making it an appropriate metric for the average-reward setting where the differential value function is well-defined up to constant shifts (multiplicative constants in the exponential space that we operate under). We rely on three standard properties of $d_H$ for element-wise operations:
\begin{enumerate}
    \item \textbf{Exponentiation:} $d_H(x^p, y^p) = |p| d_H(x, y)$ for any $p \in \mathbb{R}$.
    \item \textbf{Multiplication/Division:} $d_H(x \circ u, y \circ w) \le d_H(x, y) + d_H(u, w)$, where $\circ$ is the Hadamard (element-wise) product.
    \item \textbf{Linear Contraction (Birkhoff-Hopf Theorem):} For any strictly positive matrix $M > 0$, the linear map $x \to Mx$ acts as a strict contraction: $d_H(Mx, My) \le \tau d_H(x,y)$, where the Birkhoff contraction coefficient is $\tau = \tanh\left(\frac{\Delta(M)}{4}\right) < 1$, and $\Delta(M)$ is the projective diameter of $M$.
\end{enumerate}
Further discussion and details of this metric can be found in, e.g., ~\cite{kohlberg1982contraction}.

As established in prior sections, $\mathcal{T}$ is positive, homogeneous of degree 1, and monotonically order-preserving for $\beta \ge 1$ and thus satisfies the hypotheses of Birkhoff's convergence theorem. We can track the Lipschitz constant of $\mathcal{T}$ under $d_H$. 

We decompose the operator $\mathcal{T}(u)$ into its numerator $N(u)$, denominator $D(u)$, and outer exponent $\delta = \frac{\beta}{1+\beta}$, evaluating the metric contraction rate of each part sequentially. Let $x, y \in \mathbb{R}_{>0}^n$ be two positive vectors with initial projective distance $\Delta = d_H(x, y)$.

First, for the numerator $N(u) = (Mu)^{\frac{1}{\beta}}$, we use the rules for positive operators and exponentiation written above:
\begin{align*}
d_H(N(x), N(y)) &\le \frac{1}{\beta} d_H(Mx, My) \le \frac{\tau}{\beta} \Delta.
\end{align*}

Second, we evaluate the intermediate vector $f(u)$ inside the denominator's sum, defined element-wise as $f_j(u) = u_j^{-\frac{1}{\beta}} (Mu)_j^{\frac{1-\beta}{\beta}}$. Using the triangle inequality for element-wise multiplication, the scaling property of exponentiation and simplifying:
\begin{align*}
d_H(f(x), f(y)) &\le \left| -\frac{1}{\beta} \right| d_H(x, y) + \left| \frac{1-\beta}{\beta} \right| d_H(Mx, My).\\
d_H(f(x), f(y)) &\le \frac{1}{\beta} \Delta + \frac{\beta-1}{\beta} \tau \Delta = \left( \frac{1 + \tau(\beta-1)}{\beta} \right) \Delta.
\end{align*}

The denominator $D(u) =M f(u)$ applies the linear operator $M$ to $f(u)$, contracting the distance by another factor of $\tau$:
\begin{align*}
d_H(D(x), D(y)) &\le \tau d_H(f(x), f(y)) \le \tau \left( \frac{1 + \tau(\beta-1)}{\beta} \right) \Delta.
\end{align*}

Combining the numerator and denominator via element-wise division (cf. the Multiplication/Division rule above), the ratio $R(u) = \frac{N(u)}{D(u)}$ yields:
\begin{align*}
d_H(R(x), R(y)) &\le d_H(N(x), N(y)) + d_H(D(x), D(y)) \\
&\le \left[ \frac{\tau}{\beta} + \frac{\tau}{\beta} + \frac{\tau^2(\beta-1)}{\beta} \right] \Delta = \left( \frac{2\tau + \tau^2(\beta-1)}{\beta} \right) \Delta.
\end{align*}

Finally, applying the outer exponent $\delta = \frac{\beta}{1+\beta}$ gives the global contraction rate $C(\beta, \tau)$ for the full operator $\mathcal{T}$:
\begin{align*}
d_H(\mathcal{T}(x), \mathcal{T}(y)) &\le \frac{\beta}{1+\beta} \left( \frac{2\tau + \tau^2(\beta-1)}{\beta} \right) \Delta \\
&= \frac{2\tau + \tau^2(\beta-1)}{1+\beta} \Delta,
\end{align*}
where the coefficient $C(\beta, \tau)=\frac{2\tau + \tau^2(\beta-1)}{1+\beta} < 1$ since $\tau < 1$. 

Therefore, by the Banach fixed-point theorem, the sequence of projective distances $d_H(\mathcal{T}^{km}(x), \mathcal{T}^{km}(y)) \le C(\beta, \tau)^k \Delta$ vanishes as $k \to \infty$, proving convergence to a unique positive projective fixed point $u^*$. Because the index of primitivity $m$ remains constant across different prior policies during Posterior Policy Iteration (cf. Lemma~\ref{lem:primitivity}), the effective contraction rate per single fixed-point iteration of $\mathcal{T}$ is bounded by $C(\beta, \tau)^{1/m}$. Thus, the algorithm converges linearly under the Hilbert projective metric.

\end{proof}
\subsection{Posterior Policy Iteration Proof}\label{app:ppi}
Here we show that PPI iterations lead to a monotonically increasing entropy-rate.
\begin{proof}
Consider, at the $n^\textrm{th}$ iteration of PPI, the prior policy $\pi_n$ and corresponding posterior policy $\pi_{n+1}\propto \pi_n(a|s) u(s,a)$. Denote the entropy-regularized objective as $J(\pi, \pi_n)=\mathbb{E}_\pi (r - \beta^{-1} D_{\textrm{KL}}\left(\pi \,\|\, \pi_n \right) )$. Clearly $J(\pi_n,\pi_n) = \rho(\pi_n)$ (average-reward, i.e. entropy-rate, for policy $\pi_n$).
Now $J(\pi_{n+1}, \pi_n) \geq J(\pi_n, \pi_n) = \rho(\pi_n)$ by definition, since $\pi_{n+1}$ is the maximizer of the objective.
Since the Kullback-Liebler divergence is non-negative, we also have $\rho(\pi_{n+1}) \geq J(\pi_{n+1}, \pi_n)$.
Combining the two inequalities, we have $\rho(\pi_{n+1}) \geq \rho(\pi_n)$: in other words, the entropy-rate is monotonically increasing as a function of $n$. At both steps, equality is obtained only if {$\pi_n=\pi_{n+1}$} (at convergence).
\end{proof}


\end{document}